\newtheorem{theorem}{Theorem}
\newtheorem{proposition}[theorem]{Proposition}
\newtheorem{lemma}[theorem]{Lemma}
\theoremstyle{definition}
\newtheorem{example}[theorem]{Example}
\newcommand{\Z}{\mathcal{Z}}
\newcommand{\X}{\mathcal{X}}
\newcommand{\Y}{\mathcal{Y}}
\def\ci{\perp\!\!\!\perp} 
\newcommand{\eg}{{\em e.g.}}
\newcommand{\ie}{{\em i.e.}}
\def\1{\bm{1}}
\def\vu{{\bm{u}}}
\def\vv{{\bm{v}}}
\def\vw{{\bm{w}}}
\DeclareMathAlphabet{\mathsfit}{\encodingdefault}{\sfdefault}{m}{sl}
\SetMathAlphabet{\mathsfit}{bold}{\encodingdefault}{\sfdefault}{bx}{n}
\newcommand{\RR}{\mathbb{R}}
\title{Compositional Structures in Neural Embeddings\\ and Interaction Decompositions}
\author{Matthew Trager \qquad Alessandro Achille \qquad Pramuditha Perera\\[.3cm] Luca Zancato \qquad Stefano Soatto \\[.6cm] AWS AI Labs}
\begin{document}

\maketitle

\begin{abstract}
We describe a basic correspondence between linear algebraic structures within vector embeddings in artificial neural networks and conditional independence constraints on the probability distributions modeled by these networks. Our framework aims to shed light on the emergence of structural patterns in data representations, a phenomenon widely acknowledged but arguably still lacking a solid formal grounding. Specifically, we introduce a characterization of compositional structures in terms of ``interaction decompositions,'' and we establish necessary and sufficient conditions for the presence of such structures within the representations of a model. 
\end{abstract}
\section{Introduction}

Neural networks today generally operate without imposing explicit statistical
or modeling assumptions on data. For example, transformer
architectures~\cite{vaswaniAttentionAllYou2017} are now used with little or no
architectural modifications for a wide variety of tasks and data types ---
from natural language~\cite{devlinBERTPretrainingDeep2019},
vision~\cite{dosovitskiyImageWorth16x162021}, audio \cite{verma2021audio}, and
multi-modal data \cite{lu2022unified}. The philosophy of deep learning is to
try to learn all relevant structure 'end-to-end' from data, instead of relying
on hand-crafted inductive biases.

This kind of approach has had great empirical success, but the resulting
models generally lack {interpretability} and
{controllability}~\cite{raukerTransparentAISurvey2023}. Indeed, treating a
model's intermediate representations as unstructured makes them difficult to
manipulate and understand by humans. This has motivated research in the area
of ``explainable AI'' (XAI)~\cite{linardatosExplainableAIReview2020}.

Even if models are unstructured, symmetries and patterns in the data may be
reflected inside the model after training. This has been empirically observed
at least since the introduction of word embeddings in
NLP~\cite{mikolovDistributedRepresentationsWords2013}. More recently, emergent
structures have been observed in different
settings~\cite{radfordUnsupervisedRepresentationLearning2016}, sometimes
presented as ``world models''~\cite{li2022emergent,
gurneeLanguageModelsRepresent2023}. However, these findings are almost
entirely empirical, and the emergence of structure --- and particularly of
\emph{linear} structure --- is sometimes still referred to as a
``hypothesis''~\cite{nandaEmergentLinearRepresentations2023}.

The purpose of this note is to spell out a simple but precise mathematical
correspondence between statistical independence conditions on modeled
distributions and geometric patterns within the internal representations of
network models. This correspondence can be clearly stated in terms of
\emph{interaction
decompositions}~\cite{darrochAdditiveMultiplicativeModels1983}. These are
linear algebraic decompositions closely related to the theory of log-linear
expansions and graphical models~\cite{lauritzen1996graphical}. Our main result
extends a fact pointed out for independence models
in~\cite{tragerLinearSpacesMeanings2023}, and shows more generally how
probabilistic structure in the data is reflected in linear algebraic patterns
in the representations. Despite the simplicity of this connection, we are not
aware of other works that present such a general statement in the context of
neural embeddings.

More broadly, we believe that interaction decompositions provide an intuitive
and theoretically grounded framework for describing linear compositional
patterns within neural embeddings, and hold promise as a useful practical tool
for future work on interpretability and controllability.

\section{Related work}
\label{sec:rel-work}

Data representations used in modern machine learning are generally
\emph{distributed}~\cite{hinton1984distributed}, meaning that coordinate
dimensions do not correspond to ``semantic components'' from the data. For
example, in natural language processing (NLP),  \emph{distributional
representations} are constructed by simply observing how words or linguistic
parts co-occur in text~\cite{clarkVectorSpaceModels2015a,
turneyFrequencyMeaningVector2010}. Unlike traditional symbolist approaches,
these representations are not designed to satisfy structural constraints. A
large body of work aims to combine symbolic and distributed elements, \eg,
\cite{smolenskyTensorProductVariable1990, mitchell2008vector, baroni2010nouns,
coeckeMathematicalFoundationsCompositional2010a} (see also
\cite{ferroneSymbolicDistributedDistributional2020} for a survey).

It is now well-established that semantically meaningful structures can
\emph{emerge} within data embeddings, even if they were not explicitly
designed or enforced during training. A classic example is the vector
arithmetic associated with analogies in word
embeddings~\cite{mikolovDistributedRepresentationsWords2013}, but similar
phenomena have been observed in various other
contexts~\cite{radfordUnsupervisedRepresentationLearning2016, li2022emergent,
tragerLinearSpacesMeanings2023}. Emergent structures are also known to
sometimes be \emph{linear}, particularly towards the final layers of the
model~\cite{elhage2022superposition, nandaEmergentLinearRepresentations2023}.
These matters have attracted interest in the area of ``mechanistic
interpretability''~\cite{bricken2022monosemanticity} which aims to
reverse-engineer machine learning models and provide human-understandable
interpretations of their activations. We also
mention~\cite{zouRepresentationEngineeringTopDown2023a} that recently proposed
``representation engineering''  as a top-down investigation of emergent
structures in models for AI transparency.

Despite this widespread recognition, the emergence of latent semantic
structures is often regarded as an empirical phenomenon, and lacks a solid
mathematical grounding. Many theoretical justifications are heavily centered
on word
embeddings~\cite{aroraLatentVariableModel2016,gittensSkipGramZipfUniform2017,ethayarajhUnderstandingLinearWord2019,allenAnalogiesExplainedUnderstanding},
and are not applicable to other models including transformers.  Some recent
exceptions include~\cite{wangConceptAlgebraScoreBased2023}, which considers
linear decompositions of the ``score representation'' and is especially
relevant for diffusion models,
and~\cite{jiangOriginsLinearRepresentations2024a}, which argues that softmax
objective and gradient descent contribute to the emergence of linear
representations (see Section~\ref{sec:main-results} for a comparison with our
results). More similar to our
perspective,~\cite{tragerLinearSpacesMeanings2023} considers general
``linearly decomposable'' vector embeddings and proves that these structures
correspond to independence conditions for probabilistic softmax models. In
this paper, we expand on this observation by: 1) considering a much more
general form of linear structures based on \emph{interaction
decompositions}~\cite{darrochAdditiveMultiplicativeModels1983}, and 2) showing
that these structures precisely correspond to \emph{conditional} independence
relations in softmax models.

Finally, there has been recent discussions on the observed emergence ``world
models'' within transformer-based models~\cite{li2022emergent,
nandaEmergentLinearRepresentations2023, gurneeLanguageModelsRepresent2023},
and whether these structures challenge prior claims that models trained solely
on text lack a fundamental “understanding” of the world
~\cite{benderClimbingNLUMeaning2020}. This work can offer insights into these
debates by explaining in a simple but precise setting when latent geometric
structures can be expected in a model's representations.

\section{Preliminaries}

Our results apply to general ``softmax'' probabilistic models of the form
\begin{equation}\label{eq:distribution} 
P(Y=y|X=x) = \frac{\exp \langle
\vu(x), \vv(y) \rangle}{\sum_{y' \in \mathcal Y} \exp \langle \vu({x}),
\vv({y'})\rangle},
\end{equation} where $\X, \Y$ are sets and $\vu: \X \rightarrow V$ and $\vv:
\Y \rightarrow V$ are ``input'' and ``output'' embeddings into a common vector
space $V$ with a fixed an inner product $\langle \cdot, \cdot \rangle$. For
simplicity, we assume that both $\X$ and $\Y$ are finite; if this is not the
case, we can restrict ourselves without loss of generality to arbitrary finite
subsets.\footnote{The embeddings $\vu, \vv$ determine restrictions
of~\eqref{eq:distribution} to all subsets of the data in a way that is
consistent with marginalization.} Expressions similar to the right-hand side
of~\eqref{eq:distribution} appear as the final processing step for machine
learning models that solve classification-type tasks. For example, in
feed-forward networks $\vu(x)$ is a vector encoding of an input $x$ and
$\vv(y)$ is a row of the final-layer weight matrix corresponding to a
candidate label $y$. Similarly, in auto-regressive language models, $\vu(x)$
is the encoding of a string of text and $\vv(y)$ is the embedding of a
candidate token for the task of next-token prediction. In vision-language
models such as CLIP~\cite{radford2021learning}, $\vu, \vv$ are separate
encodings of an image and of a caption.  In these examples, the embeddings are
defined by parametric models $\vu(x) = \vu(x;\theta)$, and $\vv(y) =
\vv(y;\eta)$ (the latter is often an unconstrained vector). However, once
trained, the parameterization of these embeddings becomes irrelevant. We thus
assume that the model has been trained to accurately represent a ``true'' data
distribution and our goal is to explore how the geometry of embeddings is
related to statistical structures present in the data.

\section{Interaction decompositions}
\label{sec:interaction}

In this section, we introduce interaction decompositions that will be our main
tool for characterizing geometric structures in the embeddings. This sort of
decomposition is well known and discussed for example
in~\cite{lauritzen1996graphical} and~\cite{ayInformationGeometry2017a},
although not in the context of neural embeddings.

Let $\Z = \mathcal Z_1 \times \ldots \times \mathcal Z_k$ be a finite factored
set. For any set of indices $I \subset [k]$, we write $\mathcal Z_I:=\prod_{i
\in I} \mathcal Z_i$ and $z_I:=(z_i)_{i\in I}$. Given  $z = (z_1,\ldots,z_k)
\in \Z$, we refer to each $z_i \in \Z_i$ as a ``variable.'' We consider the
vector space $V^{\mathcal Z} = \{\vw: \mathcal Z \rightarrow V\}$ of all
embeddings. For any $I \subset [k]:=\{1,\ldots,k\}$, we define subspace $E_I
\subset V^{\mathcal Z}$ consisting in $\vw \in V^{\mathcal Z}$ such that:
\begin{enumerate}
    \item $\vw$ depends only on variables in $I$ (\ie, $\vw(z_I, z_{[k] \setminus I})$ is fixed as $z_{[k] \setminus I}$ varies, for all $z_I$);
    \item $\sum_{z_{[k]\setminus J} \in \Z_{[k] \setminus J}} \vw(z_J, z_{[k]\setminus J}) = 0$ for any $J \subsetneq I$, and $z_{J} \in \Z_{J}$.
\end{enumerate}
Intuitively, the space $E_I$ contains embeddings that depend only on the
variables in $I$ but not only on variables of a proper subset $J$ of $I$. We
refer to $E_I$ as the \emph{pure interaction space}. We now have the following
general description of these spaces.

\vspace{.2cm}

\begin{proposition}\label{prop:projections} In the setting described above, there exists a direct sum decomposition of vector spaces
\begin{equation}
V^{\mathcal Z} = \bigoplus_{I \subset [k]} E_I.
\end{equation}
The projection of $V^\Z$ onto $E_I$ is given by
\begin{equation}\label{eq:proj}
Q_I := \sum_{J \subset I} (-1)^{|I \setminus J|} \pi_J,
\end{equation}
where $\pi_J: V^\Z \rightarrow V^\Z$ is described by 
\[
\pi_J(\vw)(z) = \frac{|\Z_J|}{|\Z|}\sum_{z_{[k] \setminus J} \in \Z_{[k] \setminus J}} \vw(z_J, z_{[k] \setminus J}).
\] 
We also have that $\dim(E_I) = \dim(V) \cdot \prod_{i \in I} (|\Z_i| - 1)$.
\end{proposition}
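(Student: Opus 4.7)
The plan is to prove everything by recognizing the $\pi_J$ as projections onto the subspaces $F_J := \{\vw \in V^{\mathcal Z} : \vw \text{ depends only on } z_J\}$ and then invoking Möbius inversion on the Boolean lattice of subsets of $[k]$. First I would check that each $\pi_J$ is a linear idempotent with image $F_J$: averaging over $z_{[k]\setminus J}$ manifestly produces a function of $z_J$ only, and acts as the identity if $\vw \in F_J$. Then I would establish the key commutation $\pi_J \pi_{J'} = \pi_{J\cap J'}$ by a short direct computation: since $\pi_{J'}(\vw) \in F_{J'}$, a subsequent average over $z_{[k]\setminus J}$ is trivial on the coordinates in $[k]\setminus(J\cup J')$, and on the coordinates in $J'\setminus J$ it agrees with the averaging built into $\pi_{J \cap J'}$.

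Next I would observe that $Q_I = \sum_{J \subset I}(-1)^{|I\setminus J|} \pi_J$ is the Möbius inverse of the identity $\pi_J = \sum_{I \subset J} Q_I$ on the Boolean lattice, which is a textbook computation. The crucial lemma is then
\[
\pi_{J'}\, Q_I \;=\; [\,I \subset J'\,]\, Q_I.
\]
To prove this, expand $\pi_{J'} Q_I = \sum_{K \subset I}(-1)^{|I\setminus K|} \pi_{J' \cap K}$, write $K = A \sqcup B$ with $A \subset J' \cap I$ and $B \subset I \setminus J'$, note that $\pi_{J'\cap K} = \pi_A$, and factor the sum. The $B$-factor is $\sum_{B \subset I \setminus J'}(-1)^{|B|} = (1-1)^{|I\setminus J'|}$, which vanishes exactly when $I \not\subset J'$; when $I \subset J'$ the remaining $A$-sum reassembles $Q_I$. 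Specializing $J' \subsetneq I$ gives $\pi_{J'}Q_I = 0$ (property 2 for $E_I$), and $J' = I$ gives $\pi_I Q_I = Q_I$ (so $Q_I(\vw) \in F_I$, property 1).

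From this single identity everything else drops out mechanically. Expanding $Q_I Q_{I'} = \sum_{J\subset I}(-1)^{|I\setminus J|} \pi_J Q_{I'}$ and applying the lemma restricts the sum to $J$ with $I' \subset J \subset I$; setting $L = I \setminus J$ turns it into $(1-1)^{|I\setminus I'|}$, so $Q_I Q_{I'} = \delta_{I,I'} Q_I$. Similarly $\sum_I Q_I = \pi_{[k]} = \mathrm{id}$. Hence the $Q_I$ form a complete family of commuting projections, yielding $V^{\mathcal Z} = \bigoplus_I \mathrm{Im}(Q_I)$. The identification $\mathrm{Im}(Q_I) = E_I$ is then immediate: one inclusion is the properties already verified, and for the reverse any $\vw \in E_I$ has $\pi_J \vw = 0$ for $J \subsetneq I$ and $\pi_I \vw = \vw$, so $Q_I \vw = \vw$. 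For the dimension I would use the tensor factorization $V^{\mathcal Z} \cong V \otimes \bigotimes_{i \in [k]} \RR^{\mathcal Z_i}$, under which $F_I \cong V \otimes \bigotimes_{i \in I} \RR^{\mathcal Z_i}$ and condition 2 restricted to $F_I$ is equivalent to $\sum_{z_i}\vw = 0$ for each $i \in I$; this places $E_I$ inside $V \otimes \bigotimes_{i \in I} V_i$ with $V_i := \{f \in \RR^{\mathcal Z_i} : \sum_{z_i} f = 0\}$, $\dim V_i = |\mathcal Z_i| - 1$. The main obstacle is the bookkeeping inside the Möbius step that collapses $\pi_{J'}Q_I$ to the indicator $[I \subset J']$; once that identity is in hand, the direct sum, idempotence, orthogonality, and dimension all follow with only routine calculations.
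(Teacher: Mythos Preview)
Your proof is correct and complete. The approach, however, differs from the paper's. The paper argues top-down from the tensor factorization $V^{\mathcal Z} \cong \RR^{n_1}\otimes\cdots\otimes\RR^{n_k}\otimes V$: it decomposes each factor as $\RR^{n_i} = W_{n_i,0}\oplus W_{n_i,1}$ (constant plus zero-sum vectors), expands the tensor product into $2^k$ summands indexed by $\epsilon\in\{0,1\}^k$, identifies the summand for $\epsilon$ with $E_I$ where $I=\{i:\epsilon_i=1\}$, and writes each projection as a tensor $\tau_{n_1,\epsilon_1}\otimes\cdots\otimes\tau_{n_k,\epsilon_k}\otimes\mathrm{Id}_V$; expanding $\tau_{n_i,1}=\mathrm{Id}-\tau_{n_i,0}$ then recovers the alternating-sum formula for $Q_I$. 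You instead go bottom-up: start from the averaging operators $\pi_J$, prove the semilattice relation $\pi_J\pi_{J'}=\pi_{J\cap J'}$, and use M\"obius inversion on the Boolean lattice to extract the idempotents $Q_I$ and verify $\mathrm{Im}(Q_I)=E_I$ directly. Your key lemma $\pi_{J'}Q_I=[I\subset J']\,Q_I$ has no analogue in the paper's proof; conversely, the paper never needs to manipulate $Q_I$ combinatorially because the direct sum and projection formulas are immediate from the tensor splitting. Your route is more elementary (no appeal to representations of $\mathfrak S_n$) and yields the operator identities $Q_IQ_{I'}=\delta_{I,I'}Q_I$ and $\pi_{J'}Q_I=[I\subset J']Q_I$ as explicit byproducts, while the paper's route is shorter and makes the dimension formula and the identification of $E_I$ transparent from the outset. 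One small point: you show $E_I\subset V\otimes\bigotimes_{i\in I}V_i$ and should note that equality follows either by the reverse inclusion (immediate from the definitions) or by the dimension count $\sum_I\dim V\prod_{i\in I}(|\mathcal Z_i|-1)=\dim V\prod_i|\mathcal Z_i|$ together with your already-established direct sum.
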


This result implies that any embedding $\vw:\mathcal Z \rightarrow V$ has a \emph{unique} decomposition
\begin{equation}\label{eq:interaction-decomp}
\vw = \sum_{I \subset[k]} \vw_I, \qquad \text{ where } \vw_I \in E_I.
\end{equation}
Here $\vw_I$ depends only on variables in $I$ and thus can also be viewed as a
map $\vw_I: \Z_I \rightarrow V$. We refer to the embeddings $\vw_I$
in~\eqref{eq:interaction-decomp} as the \emph{interaction components} of
$\vw$. Note that all components are easily obtained from $\vw$
via~\eqref{eq:proj}. We sometimes say that $\vw_I$ has \emph{order} equal to
$|I|$. First-order components correspond to factors $\Z_i$, and we can view
$\mathcal V_i := Span(\vw_{i}(z_i), z_i \in \Z_i) = E_{\{i\}} \cap
Span(\vw(z), z \in \Z)$ as the ``factor space'' associated with $\Z_i$ (here
and in the following we write $\vw_i$ instead of $\vw_{\{i\}}$). If $\vw$ only
has components of order $0$ or $1$ then we may write $\vw(z) =
\vw_{\varnothing} + \sum_{i \in [k]} \vw_i(z_i)$, which expresses a
representation as a ``disentangled'' sum of vectors $\vw_i$ associated with
each variable (and each belonging to the corresponding factor space $\mathcal
V_i$), plus a mean vector.

The vanishing of interaction components can be seen as influencing the
geometry of the \emph{polytope} in embedding space whose vertices are
embeddings of elements $z \in \mathcal Z$. Roughly speaking, more vanishing
terms mean that this polytope is lower-dimensional and more ``regular.'' For
example, if all interactions are allowed, then this polytope is generically a
simplex of dimension $\prod_i |\mathcal Z_i|-1$ (spanned by $|\mathcal Z|$
points in general position); on the other extreme, if only unary interaction
terms are allowed, it is a product of simplices corresponding to each factor,
with total (affine) dimension $\sum_i (|\mathcal Z_i| - 1)$. The classical
parallelogram structure for word analogies is a particular example of the
latter situation. See Section~\ref{sec:geometry-examples} in the Appendix for
some visualizations.

Finally, we note that interaction spaces can also be viewed from the
perspective of mathematical representation theory: each space $E_I =
W_I^{\oplus \dim V}$ is associated with an irreducible representation $W_I$ of
a product of symmetric groups $G = \mathfrak S_{|\Z_1|} \times \ldots \times
\mathfrak S_{|\Z_k|}$. This perspective is briefly discussed in the appendix
of~\cite{tragerLinearSpacesMeanings2023}, and is related to the notion of
disentanglement given in~\cite{higgins2018towards}.

\vspace{.3cm}

\section{Main results}
\label{sec:main-results}

We now return to the probabilistic model for $P(Y|X)$ in~\eqref{eq:distribution}. 
We assume that the value sets are factored as set products $\X = \X_1 \times \ldots \times \X_m$ and $\Y = \Y_1 \times \ldots \times \Y_n$. 
This allows us to view each input and output data point as factored into variables $x=(x_1,\ldots, x_m)$, $y=(y_1,\ldots,y_n)$ with $x_i \in \X_i$ and $y_i \in \Y_i$. 
We assume that ``structure'' in the data can be modeled by probabilistic dependencies between these variables. We formalize this as follows.

Given any partition $Z_A, Z_B, Z_C$ of the variables $\{X_1,\ldots,X_m, Y_1,\ldots,Y_n\}$, we say that $Z_A$ and $Z_B$ are conditionally independent given $Z_C$ for $P(Y|X)$
if there {exists} a strictly positive density $p(x)$ over $\X$ such that that $Z_A$ and $Z_B$ are conditionally independent given $Z_C$ for the joint density $q(x,y) = P(Y=y|X=x)p(x)$. We write this condition as $Z_A \ci Z_B \mid_X Z_C$. As before, we write $\Z_A, \Z_B, \Z_C$ for the co-domains of $Z_A, Z_B, Z_C$. This condition can also be understood as follows.

\vspace{.2cm}

\begin{lemma}\label{lemma:cond-ind} A partition of variables satisfies $Z_A \ci Z_B \mid_X Z_C$ for $P(Y|X)$ if and only if there exist functions $f\in \RR^{\Z_A \times \Z_C}$, $g \in \RR^{\Z_B \times \Z_C}$, and a strictly positive $h \in \RR^{\X}$ such that
\begin{equation}\label{eq:factorization-exists}
P(Y=y|X=x) = f(z_A, z_C) g(z_B, z_C)h(x_1,\ldots,x_m),
\end{equation}
where $(z_A, z_B, z_C) = (x,y)$.
\end{lemma}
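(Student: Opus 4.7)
The plan is to reduce both directions to the classical factorization characterization of conditional independence: for a strictly positive joint density $q(z_A, z_B, z_C)$, one has $Z_A \ci Z_B \mid Z_C$ if and only if $q$ factors as $q = \phi(z_A, z_C)\,\psi(z_B, z_C)$ for some nonnegative functions $\phi,\psi$. The crucial observation enabling this reduction is that the softmax form~\eqref{eq:distribution} makes $P(Y=y|X=x)>0$ everywhere, so for any strictly positive $p(x)$ the joint $q(x,y) = P(y|x)\,p(x)$ is itself strictly positive, putting us squarely in the setting where the factorization theorem applies.

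For the forward direction, I would take the strictly positive $p(x)$ witnessing $Z_A \ci Z_B \mid_X Z_C$ and apply the factorization theorem to $q(x,y) = P(y|x)\,p(x)$ to obtain $q = \phi(z_A, z_C)\,\psi(z_B, z_C)$. Dividing through by $p(x)$ gives
\[
P(Y=y|X=x) \;=\; \phi(z_A, z_C)\,\psi(z_B, z_C)\cdot\frac{1}{p(x)},
\]
so setting $f=\phi$, $g=\psi$, and $h(x)=1/p(x)$ (strictly positive because $p$ is) yields the desired factorization. For the converse, given $P(Y|X) = f(z_A, z_C)\,g(z_B, z_C)\,h(x)$ with $h>0$, I would define $p(x) \propto 1/h(x)$, normalized to a probability density on $\X$. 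This $p$ is strictly positive, and the induced joint $q(x,y) = P(y|x)\,p(x)$ equals $f(z_A, z_C)\,g(z_B, z_C)$ up to a global constant, hence factorizes in exactly the required form. Since $q>0$, the factorization theorem delivers $Z_A \ci Z_B \mid Z_C$ under this $q$, establishing $Z_A \ci Z_B \mid_X Z_C$.

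The main subtlety I would need to address is that the lemma allows $f,g$ to be arbitrary real-valued, whereas the standard factorization theorem furnishes nonnegative factors. I would handle this by a brief sign argument: since $P>0$ and $h>0$, the product $f(z_A, z_C)\,g(z_B, z_C)$ is strictly positive, forcing $f(\cdot, z_C)$ and $g(\cdot, z_C)$ each to be of constant (nonzero) sign for every fixed $z_C$. For those $z_C$ where both signs are negative, a joint sign flip of $f$ and $g$ produces strictly positive representatives without altering the product. After this reduction, the argument above applies verbatim, and no other step involves anything beyond routine manipulation of the classical theorem.
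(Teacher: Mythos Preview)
Your proof is correct and follows essentially the same route as the paper's: both directions reduce to the classical factorization characterization of conditional independence, with the identification $h(x) = 1/p(x)$ (up to normalization) linking the two formulations. The paper's proof is a terse two-line version of your argument and does not spell out the sign subtlety you raise, which is a harmless addition since the factorization $q=\phi\psi$ with $q>0$ already suffices to verify conditional independence regardless of the signs of $\phi,\psi$.
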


We now state our main result relating interaction decompositions and conditional independence constraints. In the following, given two sets of indices $I \subset [m]$, $J \subset [n]$, we write $I \sqcup J$ for the disjoint union as a subset of $[m] \sqcup [n] \cong [m+n]$.

\vspace{.3cm}

\begin{theorem}\label{thm:general} Let $Z_A, Z_B, Z_C$ be a partition of the variables $\{X_1,\ldots,X_m, Y_1,\ldots,Y_n\}$. Then for a distribution $P(Y | X)$ as in~\eqref{eq:distribution}, the condition
\begin{equation}\label{eq:ci-a}
Z_A \ci Z_B \mid_X Z_C
\end{equation}
holds \emph{if and only if} the interaction decompositions of the embeddings $\vu = \sum_{I \subset [m]} \vu_I$ and $\vv = \sum_{J \subset [n]} \vv_J$ satisfy
\begin{equation}\label{eq:interaction-vanishing}
\langle \vu_I, \vv_J\rangle = 0, 
\end{equation}
for all $I \subset [m]$ and $J \subset [n]$ with $J \ne \varnothing$ such that $(I \sqcup J) \cap A \ne \varnothing$ and $(I \sqcup J) \cap B \ne \varnothing$.
\end{theorem}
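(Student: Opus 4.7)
The plan is to reduce the conditional independence statement to an additive decomposition of $\langle \vu(x), \vv(y)\rangle$ and then read off which interaction components must vanish.

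The first step uses Lemma~\ref{lemma:cond-ind}: the condition $Z_A \ci Z_B \mid_X Z_C$ is equivalent to a factorization $P(Y|X) = f(z_A,z_C)\,g(z_B,z_C)\,h(x)$. Since $P > 0$ (softmax) and $h > 0$, we may take $f, g > 0$ without loss of generality, take logarithms, and fold $\log h(x) + \log Z(x)$ (where $Z(x) = \sum_{y'} \exp\langle \vu(x), \vv(y')\rangle$) into a single function $\gamma(x)$ to rewrite the CI condition as
\[
\langle \vu(x), \vv(y)\rangle \;=\; \alpha(z_A, z_C) + \beta(z_B, z_C) + \gamma(x)
\]
for some real-valued $\alpha, \beta, \gamma$. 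Conversely, exponentiating and normalizing any such additive split recovers the factorization demanded by the lemma, so the CI condition is equivalent to the existence of such a three-term split.

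The second (and key) step is to recognize the bilinear expansion $\langle \vu, \vv\rangle(x,y) = \sum_{I \subset [m],\, J \subset [n]} \langle \vu_I(x_I), \vv_J(y_J)\rangle$ as the interaction decomposition of $\langle \vu, \vv\rangle \in \RR^{\X \times \Y}$ on the combined factored set indexed by $[m] \sqcup [n]$. The support condition on $I \sqcup J$ is immediate; the sum-to-zero axiom for $E_{I \sqcup J}$ follows by expanding the pairing in an orthonormal basis of $V$ and noting that summing $\vu_I$ over any single variable in $I$ vanishes by $\vu_I \in E_I$ (and analogously for $\vv_J$ on $J$). Uniqueness in Proposition~\ref{prop:projections} then identifies $\langle \vu_I, \vv_J\rangle$ as the canonical $(I \sqcup J)$-component of $\langle \vu, \vv\rangle$. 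I expect this to be the main obstacle: without this identification one would only know that $\langle \vu, \vv\rangle$ admits \emph{some} decomposition into terms with the right supports, not that the bilinear pieces themselves are the unique interaction components that must match the constraints of the next step.

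The third step links function supports to vanishing components: if a real-valued function on $\X \times \Y$ depends only on variables in $S \subset [m] \sqcup [n]$, then every interaction component $Q_K$ with $K \not\subset S$ vanishes (pick $i \in K \setminus S$; the component is constant in $z_i$ but must sum to zero over $z_i$ by the $E_K$ axiom applied with proper subset $K \setminus \{i\}$). Applied to Step 1, the interaction components of $\alpha$, $\beta$, and $\gamma$ can be nonzero only for $I \sqcup J \subset A \cup C$, $I \sqcup J \subset B \cup C$, and $J = \varnothing$, respectively. Combining with Step 2 and linearity of the decomposition, $\langle \vu_I, \vv_J\rangle = \alpha_{I \sqcup J} + \beta_{I \sqcup J} + \gamma_{I \sqcup J}$, which is forced to vanish exactly when $(I,J)$ fails all three of these support conditions --- precisely the index set in the theorem's conclusion. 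For the converse, distribute the surviving components among the three support classes to form $\tilde\alpha(z_A, z_C)$, $\tilde\beta(z_B, z_C)$, $\tilde\gamma(x)$, then invoke the converse direction of Step 1.
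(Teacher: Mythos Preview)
Your proposal is correct and follows essentially the same three-step strategy as the paper's proof: convert the conditional independence to an additive split of $\langle \vu,\vv\rangle$ via Lemma~\ref{lemma:cond-ind}, identify $\langle \vu_I,\vv_J\rangle$ as the $(I\sqcup J)$-interaction component of $\langle \vu,\vv\rangle\in\RR^{\X\times\Y}$, and then use a ``support implies vanishing components'' principle. The only differences are tactical---the paper establishes Step~2 by computing $Q_{I\sqcup J}^{\X\times\Y}\langle \vu,\vv\rangle$ with the explicit projection formula from Proposition~\ref{prop:projections}, and packages Step~3 as a separate lemma proved via M\"obius inversion, whereas you check the $E_{I\sqcup J}$ axioms directly and give a one-line sum-to-zero argument; both routes are valid and of comparable length.
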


\vspace{.4cm}

Note that the condition on interaction decompositions excludes $J= \varnothing$, which corresponds to the ``mean'' term $\vv_{\varnothing} = \frac{1}{|\mathcal Y|}\vv(y)$: indeed, it is easy to realize that translating the embeddings $\vv$ by any fixed vector does not affect the model~\eqref{eq:distribution}.

This result provides a simple but precise ``dictionary'' for translating probabilistic conditions on the data~\eqref{eq:ci-a} into geometric conditions on the embeddings~\eqref{eq:interaction-vanishing} and vice-versa. While several recent works have proposed similar formalisms for linear structures in embeddings using the softmax loss  (\eg,~\cite{parkLinearRepresentationHypothesis2023,jiangOriginsLinearRepresentations2024a,tragerLinearSpacesMeanings2023}) we believe that our description is simpler while also being more general. For example,~\cite{jiangOriginsLinearRepresentations2024a} consider only pairs of separable (independent) binary concepts, whereas we consider \emph{conditional} independence relations between arbitrary factors. Our treatment also significantly extends that of~\cite{tragerLinearSpacesMeanings2023}, which only considers factorizations of the output variable, while we also deal with factorizations of the input (which is more important in many situations). Another advantage of our description is that it provides \emph{necessary and sufficient conditions} for the existence of geometric structure, as opposed to only sufficient conditions that most prior works focus on. This gives a simple way to impose interpretable probabilistic conditions by manipulating embeddings. As we will discuss, the condition~\eqref{eq:interaction-vanishing} is also significantly more informative than prior intuitions such as ``relations''=``lines''~\cite{aroraLatentVariableModel2016}, or ``features''=``directions''~\cite{elhage2022superposition}.

In the remainder of the section, we describe a few specializations of Theorem~\ref{thm:general} to particular cases of interest, with some slight refinements. First, we assume that $m=1, n>1$, which means that we consider only factorizations of the output variable $y = (y_1,\ldots, y_n)$ and conditional independence on $P_x := P(Y|X=x)$.

\vspace{.2cm}

\begin{proposition}\label{prop:conditional-independence} Let $I,J, K$ be a partition of $[n]$ and let $x \in \X$ be arbitrary. Then the conditional independence relation
\begin{equation}\label{eq:ci}
Y_I \ci Y_J \,\, | \,\, Y_K \,\,\, \mbox{ for } P_x = P(\, \cdot \, | X=x)
\end{equation}
where $P(Y|X)$ is as in~\eqref{eq:distribution}
is equivalent to the interaction components of the embedding $\vv$ satisfying
\begin{equation}\label{eq:interaction-vanishing-2}
\langle \vu(x), \vv_H \rangle = 0,
\end{equation}
for all $H \subset[n]$ such that $H \cap I \ne \varnothing, H \cap J \ne \varnothing$.
In particular, if~\eqref{eq:ci} holds for all $x \in \X_0 \subset \X$ and $Span(\vu(x) \colon x \in \X_0) = V$, then $\vv_H = 0$; this in turn implies that~\eqref{eq:ci} holds for all $x \in \X$.
\end{proposition}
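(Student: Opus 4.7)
The plan is to establish the equivalence via a two-step reduction. First, since $P_x$ is strictly positive (being a softmax), the standard log-linear characterization of conditional independence says that $Y_I \ci Y_J \mid Y_K$ under $P_x$ is equivalent to the existence of functions $a \in \RR^{\Y_I \times \Y_K}$ and $b \in \RR^{\Y_J \times \Y_K}$ with $\log P_x(y) = a(y_I, y_K) + b(y_J, y_K)$. Since $\log P_x(y) = \langle \vu(x), \vv(y)\rangle - \log Z(x)$ and the $\log Z(x)$ term is constant in $y$ (and can thus be absorbed into $a$), this is in turn equivalent to the same additive decomposition holding for the scalar function $\phi(y) := \langle \vu(x), \vv(y)\rangle \in \RR^{\Y}$.

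Second, I would apply a purely combinatorial lemma on interaction decompositions, which I regard as the main technical obstacle of the proof: for any $\phi \in \RR^\Y$ with decomposition $\phi = \sum_{H \subset [n]} \phi_H$ as in Proposition~\ref{prop:projections} (applied with $V = \RR$), we have $\phi(y) = a(y_I, y_K) + b(y_J, y_K)$ for some $a, b$ if and only if $\phi_H = 0$ for every $H$ with $H \cap I \ne \varnothing$ and $H \cap J \ne \varnothing$. For the forward direction, one uses uniqueness of the decomposition: any function depending only on $(y_I, y_K)$ (resp.\ $(y_J, y_K)$), when viewed as an element of $\RR^\Y$, has interaction components supported only on subsets of $I \cup K$ (resp.\ $J \cup K$), so any ``crossing'' component must vanish. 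For the converse, partition the surviving indices as $\{H : H \cap J = \varnothing\} \sqcup \{H : H \cap I = \varnothing,\ H \cap J \ne \varnothing\}$ and collect the nonvanishing components accordingly into $a$ and $b$.

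To close the main equivalence, note that by bilinearity of the inner product and uniqueness of the interaction decomposition, the $H$-th component of $\phi$ is precisely $y \mapsto \langle \vu(x), \vv_H(y_H)\rangle$: each such function lies in $E_H \subset \RR^{\Y}$ (the defining properties are preserved by taking inner product with a fixed vector), and their sum over $H$ equals $\phi$. Combining this identification with the combinatorial lemma yields the characterization \eqref{eq:interaction-vanishing-2}.

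Finally, for the last claim of the proposition, suppose \eqref{eq:interaction-vanishing-2} holds for every $x \in \X_0$ and that $\mathrm{Span}(\vu(x) \colon x \in \X_0) = V$. Fix any $H$ with $H \cap I \ne \varnothing$ and $H \cap J \ne \varnothing$ and any $y_H \in \Y_H$. The linear functional $v \mapsto \langle v, \vv_H(y_H)\rangle$ then vanishes on the spanning set $\{\vu(x) : x \in \X_0\}$, hence on all of $V$, so $\vv_H(y_H) = 0$. Thus $\vv_H \equiv 0$, which makes \eqref{eq:interaction-vanishing-2} trivially hold for every $x \in \X$; the equivalence just established then yields \eqref{eq:ci} for every $x \in \X$.
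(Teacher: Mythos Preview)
Your proof is correct and follows essentially the same route as the paper: the paper obtains the proposition as the immediate special case $\mathcal X=\{x\}$, $m=1$ of Theorem~\ref{thm:general}, whose proof uses exactly your ingredients (the log-linear factorization characterization of conditional independence, the combinatorial lemma on interaction supports corresponding to Lemma~\ref{lemma:interaction-basic}, and the identification $\phi_H=\langle \vu(x),\vv_H\rangle$). The only difference is organizational --- you argue the special case directly rather than citing the general theorem.
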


The last part of the statement says that if the conditional independence on output variables holds for a set of inputs $\X_0$ whose representations span $V$, then it must actually must hold for {all} input variables, because certain interaction components actually vanish. For example, if the factors are independent for $P_x$ for all $x$, then only components of order $0$ or $1$ appear in the decomposition of $\vv$ (as in the setting of~\cite[Proposition 7]{tragerLinearSpacesMeanings2023}).

\vspace{.2cm}

We next consider the case $m>1,  n=1$ and $x = (x_1,\ldots,x_m)$. If $I,J,K$ are a partition of $[m]$, then Lemma~\ref{lemma:cond-ind} means that we can write
\begin{equation}\label{eq:relative-causal-independence}
\begin{aligned}
&P(Y=y \, | \, X_I=x_I, X_J=x_J, X_K=x_K)\\
&= f(y,x_I, x_K)g(y, x_J, x_K) h(x_I, x_J, x_K),
\end{aligned}
\end{equation}
for appropriate functions $f, g, h$. This condition is a sort of ``relative causal independence'' between $X_I, X_J$. Indeed, (strict) causal independence can be defined as~\eqref{eq:relative-causal-independence} with $h=1$, since in that case the contribution of $X_I$ and $X_J$ to $P(Y|X)$ can be computed separately. In contrast, when $h \ne 1$, the contributions of $X_I, X_J$ are entangled, but they become independent if we consider ratios of probabilities for $y, y' \in \Y$:
\begin{equation}
\begin{aligned}
&\frac{P(Y=y \, | \, X_I=x_I, X_J=x_J, X_K=x_K)}{P(Y=y' \, | \, X_I=x_I, X_J=x_J, X_K=x_K)} \\
&= \frac{f(y,x_I, x_K)}{f(y',x_I, x_K)} \cdot \frac{g(y, x_J, x_K)}{g(y', x_J, x_K)}.
\end{aligned}
\end{equation}
Since~\eqref{eq:relative-causal-independence} is a weaker constraint than strict causal independence, it is satisfied by more distributions.

\vspace{.4cm}

\begin{proposition}\label{prop:relative-causal-independence} Let $I,J, K$ be a partition of $[m]$. Then the condition of ``relative causal  independence'' in~\eqref{eq:relative-causal-independence} is satisfied for all $y, y' \in \mathcal Y_0 \subset \Y$
if and only if the interaction components of the embedding $\vu$ satisfy
\begin{equation}\label{eq:interaction-vanishing=x}
\langle \vu_H, \vv(y) - \vv(y')\rangle = 0, 
\end{equation}
for all $H \subset [m]$ such that $H \cap I \ne \varnothing, H \cap J \ne \varnothing$ and $y, y' \in \Y_0$.
In particular, if~$Span(\vv(y) - \vv(y') \colon y,y' \in \Y_0) = V$, then $\vu_H = 0$; this in turn implies that~\eqref{eq:relative-causal-independence} holds for $\Y_0 = \Y$.
\end{proposition}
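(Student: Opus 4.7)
The plan is to reduce the factorization condition \eqref{eq:relative-causal-independence} to a statement purely about log-probability ratios, which can then be analyzed by the uniqueness of the interaction decomposition (Proposition~\ref{prop:projections}).

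First, I would observe that because $P(y|x)$ has softmax form, the normalizing constant $\sum_{y'}\exp\langle\vu(x),\vv(y')\rangle$ cancels in log-ratios, giving $\log P(y|x) - \log P(y'|x) = \langle \vu(x), \vv(y)-\vv(y')\rangle$. I would then show that \eqref{eq:relative-causal-independence} holding for $y, y' \in \Y_0$ is equivalent to the scalar function
\[
\phi_{y,y'}(x) := \langle \vu(x), \vv(y) - \vv(y')\rangle
\]
admitting, for every such pair, a decomposition $\phi_{y,y'}(x) = A_{y,y'}(x_I, x_K) + B_{y,y'}(x_J, x_K)$. The forward direction follows by taking the log of \eqref{eq:relative-causal-independence} and subtracting (the $h$ and $P(y_0|x)$ terms cancel). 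For the converse, fixing some $y_0 \in \Y_0$ and exponentiating gives $P(y|x) = P(y_0|x)\cdot e^{A_{y,y_0}(x_I,x_K)}\cdot e^{B_{y,y_0}(x_J,x_K)}$, matching \eqref{eq:relative-causal-independence} with $h(x) = P(y_0|x)$ (strictly positive) and $f,g$ the exponentials.

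Next, I would lift the interaction decomposition $\vu = \sum_{H \subset [m]} \vu_H$ of the $V$-valued embedding to a decomposition of the scalar function $\phi_{y,y'}$ in $\RR^{\X}$. Since taking the inner product with a fixed vector $v = \vv(y)-\vv(y') \in V$ commutes with the projection formula \eqref{eq:proj}, each function $x \mapsto \langle \vu_H(x_H), v\rangle$ depends only on $x_H$ and inherits the zero-marginal condition from $\vu_H$, so it lies in the scalar pure interaction space $E_H \subset \RR^{\X}$. By uniqueness of the decomposition (applied to $V = \RR$), this is precisely the interaction decomposition of $\phi_{y,y'}$. Moreover, a function on $\X$ depends only on $(x_I,x_K)$ iff its interaction decomposition is supported on subsets $H \subset I \cup K$ (and analogously for $(x_J,x_K)$). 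Hence the existence of a splitting $\phi_{y,y'} = A + B$ is equivalent to the vanishing of all interaction components indexed by $H$ with $H \cap I \ne \varnothing$ and $H \cap J \ne \varnothing$, which is exactly \eqref{eq:interaction-vanishing=x}.

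Finally, for the ``in particular'' clause, I would note that \eqref{eq:interaction-vanishing=x} says the linear functional $v \mapsto \langle \vu_H(x_H), v\rangle$ vanishes on every $\vv(y) - \vv(y')$ with $y, y' \in \Y_0$; if these span $V$, the functional vanishes on $V$, forcing $\vu_H(x_H) = 0$ for each $x_H$, hence $\vu_H \equiv 0$ in $E_H$. This in turn makes \eqref{eq:interaction-vanishing=x} hold for every $y,y' \in \Y$, so \eqref{eq:relative-causal-independence} extends from $\Y_0$ to $\Y$.

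The main obstacle is the middle step: one must carefully check that the scalar interaction decomposition of $\phi_{y,y'}$ in $\RR^{\X}$ is really obtained termwise from that of $\vu$, and translate ``depends only on $(x_I,x_K)$'' into ``supported on $H \subset I \cup K$''. Both are linear-algebraic bookkeeping relying on Proposition~\ref{prop:projections}, but a careless indexing could misattribute components near the overlap $H \subset K$ and break the clean matching with the ``mixed'' index set $\{H : H \cap I, H \cap J \ne \varnothing\}$.
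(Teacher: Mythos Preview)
Your proof is correct and somewhat more self-contained than the paper's. The paper derives the result as a quick corollary of Theorem~\ref{thm:general}: it restricts $P(Y\mid X)$ to $Y\in\Y_0$, views $\Y_0$ as a single unfactored output set ($n=1$), and applies Theorem~\ref{thm:general} to obtain $\langle \vu_H,\vv_{|\Y_0,1}\rangle=0$ for the relevant $H$, where $\vv_{|\Y_0,1}(y)=\vv(y)-|\Y_0|^{-1}\sum_{y'\in\Y_0}\vv(y')$ is the order-one (centered) component of the restricted output embedding. The claim then follows from the observation that $\mathrm{Span}\{\vv_{|\Y_0,1}(y)\}=\mathrm{Span}\{\vv(y)-\vv(y')\}$. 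Your argument bypasses Theorem~\ref{thm:general} entirely: you work directly with log-ratios $\phi_{y,y'}$ and apply the interaction decomposition at the scalar level, in effect inlining the relevant portion of the proofs of Theorem~\ref{thm:general} and Lemma~\ref{lemma:interaction-basic} for this special case. The paper's route is shorter given its machinery; yours is more elementary and makes the role of the differences $\vv(y)-\vv(y')$ transparent from the outset rather than emerging via a span identity at the end.
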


\vspace{.2cm}

We observe that when factorizations of the input variables are considered, the corresponding interaction decompositions are constrained by differences between pairs of output embeddings in~\eqref{eq:interaction-vanishing=x}. As before, this is consistent with the fact that applying translations to $\vv$ does not affect the model.

\vspace{.2cm}

Finally, we assume that $m = n$ and $P(Y|X)$ factors as
\begin{equation}\label{eq:causal-independence}
\begin{aligned}
&P(Y_1=y_1,\ldots,Y_n=y_n | X_1=x_1,\ldots, X_n=x_n) \\
&= h(x_1,\ldots,x_n) \cdot \prod_{i=1}^n f_i(x_i,y_i).
\end{aligned}
\end{equation}
This describes the situation in which the variable $x_i$ only affects a corresponding part of the output $y_i$, albeit in a ``relative'' sense, as before. Theorem~\ref{thm:general} directly gives the following.

\vspace{.2cm}

\begin{proposition}\label{prop:causal-independence} Write the interaction decompositions of the embeddings as
$\vu = \vu_\varnothing + \sum_{i=1}^k \vu_{i} + \tilde{\vu}$ and $\vv = \vv_\varnothing + \sum_{i=1}^k \vv_{i} + \tilde{\vv}$, where $\tilde \vu, \tilde \vv$ collect all components of order at least two.
Then the condition~\eqref{eq:causal-independence} is equivalent to:
\begin{enumerate}
    \item $\langle \vu, \tilde \vv \rangle = \langle \tilde \vu, \vv - \vv_{\varnothing} \rangle = 0$.
    \item $\langle \vu_{i}, \vv_{j} \rangle = 0$ unless $i=j$.
\end{enumerate}
\end{proposition}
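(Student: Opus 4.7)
My plan is to reduce~\eqref{eq:causal-independence} to a conjunction of $n$ conditional independence relations and apply Theorem~\ref{thm:general} to each, then aggregate the resulting orthogonality constraints. I would first establish that~\eqref{eq:causal-independence} is equivalent to the family
\[
Y_i \ci (X_{[n]\setminus\{i\}},\, Y_{[n]\setminus\{i\}}) \,\mid_X\, X_i, \qquad i = 1, \ldots, n. \qquad (\star)
\]
The forward direction is immediate from Lemma~\ref{lemma:cond-ind}: given~\eqref{eq:causal-independence}, take $f := f_i(x_i,y_i)$, $g := \prod_{j \neq i} f_j(x_j,y_j)$, and reuse the $h$ from~\eqref{eq:causal-independence}. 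For the converse, I would apply Proposition~\ref{prop:projections} to the scalar function $\log P : \X \times \Y \to \RR$: each relation in $(\star)$, via Lemma~\ref{lemma:cond-ind}, writes $\log P$ as a sum of a piece supported on $\{X_i, Y_i\}$ and a piece whose support avoids $Y_i$, so by uniqueness of the interaction decomposition, any nonzero pure interaction component of $\log P$ whose support contains the $Y_i$-index must lie inside $\{X_i, Y_i\}$. Intersecting this restriction over all $i$ leaves nonzero components supported either entirely among the $X$'s or in some pair $\{X_i, Y_i\}$, and exponentiating recovers~\eqref{eq:causal-independence}.

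Next, I would apply Theorem~\ref{thm:general} to the $i$-th relation of $(\star)$ with $A = \{Y_i\}$, $C = \{X_i\}$, and $B$ the complement, yielding $\langle \vu_I, \vv_J \rangle = 0$ whenever $J \neq \varnothing$, $i \in J$, and at least one of $I$, $J$ contains an element of $[n] \setminus \{i\}$. Varying $i$ leads to a three-way case split: \emph{(i)} for any $J$ with $|J| \geq 2$ and any $I$, choose some $i \in J$ with $J \setminus \{i\} \neq \varnothing$ to deduce $\langle \vu_I, \vv_J \rangle = 0$; \emph{(ii)} for $J = \{i\}$ with $|I| \geq 2$, the set $I$ automatically contains an index $\neq i$, giving $\langle \vu_I, \vv_{\{i\}} \rangle = 0$; \emph{(iii)} for $J = \{i\}, I = \{j\}$ with $j \neq i$, the constraint becomes $\langle \vu_j, \vv_i \rangle = 0$. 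Case \emph{(i)} is exactly $\langle \vu, \tilde\vv \rangle = 0$; cases \emph{(i)} and \emph{(ii)} together give $\langle \tilde\vu, \vv - \vv_\varnothing \rangle = 0$; and case \emph{(iii)} is condition (2). The passage between collapsed statements like $\langle \vu, \tilde\vv \rangle = 0$ and per-pair statements $\langle \vu_I, \vv_J \rangle = 0$ relies on the observation that each scalar $(x,y) \mapsto \langle \vu_I(x_I), \vv_J(y_J) \rangle$ lies in the pure interaction space $E_{I \sqcup J}$ relative to the joint factored set $\X \times \Y$; the direct sum decomposition in Proposition~\ref{prop:projections} then ensures that a sum of such functions indexed by distinct pairs $(I, J)$ vanishes identically iff each summand does. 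Reversing this chain of equivalences supplies the ``if'' direction.

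The hardest step will be the converse of the first paragraph --- moving from $n$ weak conditional factorizations of $P$ to the single product factorization~\eqref{eq:causal-independence}. Going through the interaction decomposition of $\log P$ (as a scalar embedding) is what keeps this clean: each CI relation becomes a support restriction on the pure interaction components of $\log P$, and the intersection over $i$ carves out exactly the required structure. The remaining steps are a direct application of Theorem~\ref{thm:general} followed by bookkeeping.
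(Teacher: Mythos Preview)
Your proposal is correct and follows essentially the same approach as the paper: translate~\eqref{eq:causal-independence} into a family of conditional independence relations, apply Theorem~\ref{thm:general} to each, and collapse the resulting per-pair orthogonality constraints into conditions (1) and (2) via the direct-sum property of the interaction spaces. One organizational remark: the ``hardest step'' you flag --- proving that the $n$ relations in $(\star)$ together imply the factorization~\eqref{eq:causal-independence} --- is bypassed in the paper by going directly from the aggregated orthogonality constraints back to~\eqref{eq:causal-independence} (summing only the surviving terms of $\langle \vu,\vv\rangle$), so you could shorten your argument by dropping the separate $(\star)\Rightarrow$\eqref{eq:causal-independence} step and closing the loop through (C) instead.
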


Other sets of independence constraints can be considered and similarly characterized geometrically using Theorem~\ref{thm:general}.

\section{Qualitative examples and discussion}

In this section, we discuss a few informal examples of data structures reflected in model embeddings. 
\vspace{.4cm}

\begin{example}[Analogies.]\label{ex:analogies}
Consider the typical example of word analogies: $\Z = \{w, m, q, k\}$ (woman, man, queen, king). This set of words is naturally viewed as a product of variables $\Z = \Z_1 \times \Z_2$ with $\Z_1=\{\text{female}, \text{male}\}$ and $\Z_2=\{\text{non-royal},\text{royal}\}$. If $\vw: \Z \rightarrow V$ is any vector embedding, then the interaction decomposition of the vector of (say) `woman' is given by $\vw({w}) = \sum_{I} \vw_I({w})$ where
\[
\begin{aligned}
&\vw_{\varnothing}({w}) = \frac{1}{4}\left(\vw({w}) + \vw({m}) + \vw({q}) + \vw({k})\right)\\
&\vw_{1}({w}) = \frac{1}{4}\left(\vw({w}) - \vw({m}) + \vw({q}) - \vw({k})\right)\\
&\vw_{2}({w}) = \frac{1}{4}\left(\vw({w}) + \vw({m}) - \vw({q}) - \vw({k})\right)\\
&\vw_{\{1,2\}}({w}) = \frac{1}{4}\left(\vw({w}) - \vw({m}) - \vw({q}) + \vw({k})\right).\\
\end{aligned}
\]
The pairwise interaction component $\vw_{\{1,2\}}({w})$ is, up to a scalar factor, precisely the vector which is required to be zero in the classical ``parallelogram'' relation ($\vw(w) - \vw(m) = \vw(q) - \vw(k)$). The same is true for the representations of $m, q, k$. Thus, linear analogy relations arise exactly when pairwise interaction components vanish. 
\end{example}

\begin{example}[Decomposable embeddings] Embeddings with only unary interactions $\vw = \vw_0 + \sum_{i \in [k]} \vw_i$ generalize the parallelogram structure of analogies and correspond to independence conditions among factors (Figure~\ref{fig:qualitative-examples}, left). Decomposability is not only about ``directions'' but requires actual \emph{equality} among differences of embedding vectors with the same differentiating factors (\eg, sides of a parallelogram). The vector components $\vw_i$ are also generally not orthogonal, but one can show that different interaction components orthogonal if and only if the inner product is invariant to permutations of factors (\ie, $\langle \vu(x), \vu(x') \rangle = \langle \vu(g(x)), \vu(g(x')) \rangle$ where $g \in \mathfrak S_{|\mathcal X_1|} \times \mathfrak S_{|\mathcal X_2|} \times \mathfrak S_{|\mathcal X_3|}$). Decomposability is best viewed as an \emph{affine} property of embeddings, involving parallelism but not distances or angles.\footnote{Indeed, as noted in~\cite{parkLinearRepresentationHypothesis2023,tragerLinearSpacesMeanings2023}, the inner product of embeddings of the same type is not meaningful in terms of probabilities. The two embeddings should actually be seen as mappings into dual vector spaces $V$ and $V^*$.} However, since non-zero interaction terms means that different factors are not independent, the norm of the interaction factors could be used as a heuristic geometric version of mutual information (Figure~\ref{fig:qualitative-examples}, right).

\begin{figure}[htbp]
    \centering
    \begin{minipage}[b]{0.65\textwidth}
        \centering
        \includegraphics[width=\textwidth,valign=c]{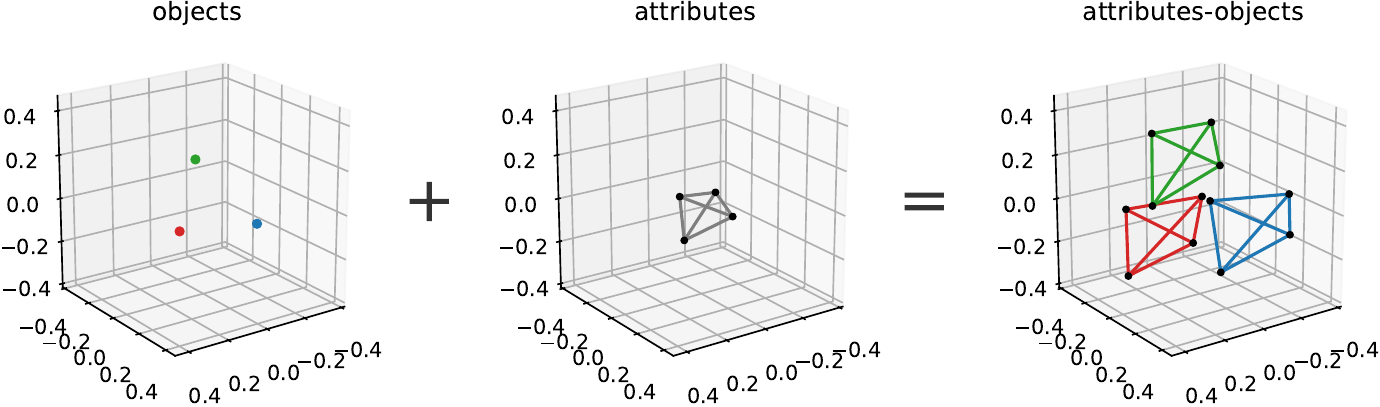}
    \end{minipage}\,\,\,\,\,
    \begin{minipage}[b]{0.29\textwidth}
        \centering
        \includegraphics[width=\textwidth,valign=c]{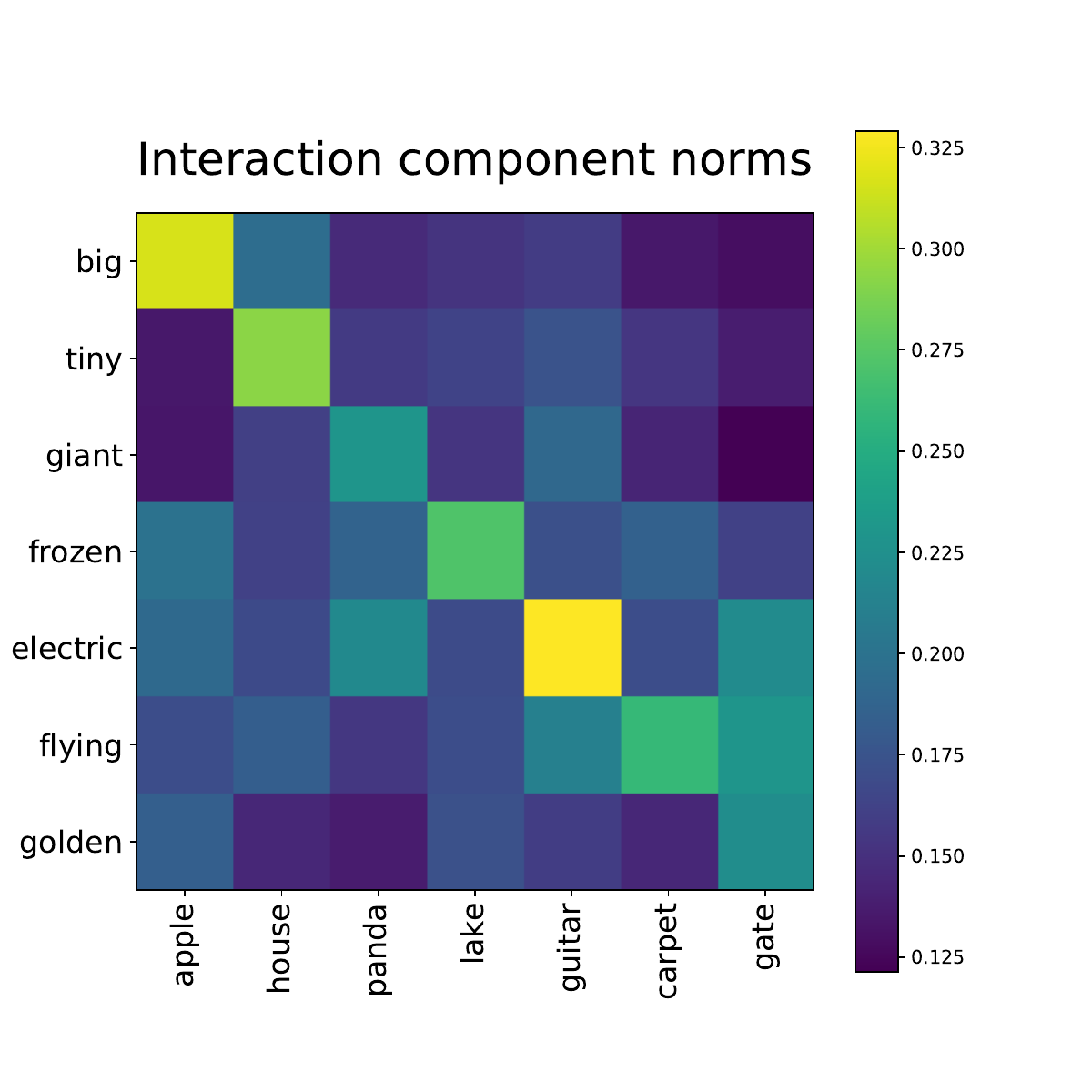}
    \end{minipage}    
    \caption{\emph{Left:} visualization of decomposable structure from object-attribute paris embedded with ST5-XL~\cite{niSentenceT5ScalableSentence2021}. 
    \emph{Right:} norm of interaction components for pairs of attributes-objects. Large norms correspond to pairs of words with strong contextual meanings.}
    \label{fig:qualitative-examples}
\end{figure}
\end{example}

\begin{example}[Conditionally independent factors.] Consider inputs $\mathcal X = \X_1 \times \X_2 \times \X_3$ corresponding to a random variable $X = (X_1,X_2,X_3)$ with $X_1 \ci X_2 \, | \, X_3$. Assume that the output embeddings are such that $Span(\vv(y) - \vv(y') \colon y,y' \in \mathcal Y) = V$. If this is not the case, we replace the embeddings $\vu$ with their projections onto that space. Proposition~\ref{prop:causal-independence} implies that output embeddings can be written as
\[
\vu(x_1,x_2,x_3) = \vu_0 + \vu_1(x_1) + \vu_2(x_2) + \vu_3(x_3) + \vu_{12}(x_1,x_2) + \vu_{13}(x_1,x_3).
\]
If we fix the value $x_3$, then the 
the embeddings are decomposable;
however if we vary $x_2$ they are not (\eg, if $|\X_1| = |\X_2| = 2$ then different parallelograms do not have parallel edges) unless $X_1 \ci X_2 \ci X_3$. See Figure~\ref{fig:geometric-examples} in the Appendix. This means that our framework also models ``nonlinear structures'' in embeddings as a varying conditioning contexts. 
\end{example}

\begin{example}[Grammars] Assume that $\mathcal Y$ is a set of text strings and $\mathcal X$ is a set of conditioning inputs, which may be text strings, images, or any other data type. Assume that $P_x = P(\, \cdot\, |X=x)$ is described by a {probabilistic context-free grammar} (PCFG)~\cite{Jurafsky2009}
for all $x$. 
Assume moreover that the underlying context-free grammar for this distribution does not depend on $x$; this means that only conditional probabilities of expansions depend on $x$. Then for every derivable string of (terminal and non-terminal) symbols $t = A_1 \ldots A_n$, the expansion probabilities of the string depend on the probabilities of the symbols $A_i$ independently. This allows us to apply Proposition~\ref{prop:conditional-independence}, viewing $\mathcal Y_i$ as the set of possible expansions of $A_i$. Thus, if a string $s$ is an expansion of $t$ in a unique way, then we can decompose its embedding as
\[
\vv(s) = \vv_{t,\varnothing} + \vv_{t, A_1}(s) + \ldots + \vv_{t,A_n}(s).
\]
If the parsing of $s$ is ambiguous from $t$ then its representation will be a ``superposition'' of representations of this type.
 We can also iterate this approach and obtain a decomposition of the form
\[
\vv(s) = \vv_0  + \sum_{i} \vv_{\alpha_i \rightarrow \beta_i}(s)
\]
where $\alpha_i \rightarrow \beta_i$ is a sequence of derivations. This discussion suggests that syntactic structure will be at least partially reflected in language embeddings in a linear fashion.
\end{example}

\begin{example}[Vision-Language models.] In vision-language model such as CLIP~\cite{radford2021learning}, a set of images $\mathcal X$ and text $\mathcal Y$ are represented as vectors using embeddings $\vu: \mathcal X \rightarrow V$ and $\vv: \mathcal Y \rightarrow V$ that capture conditional probabilities $p(y|x)$ and $p(x|y)$ as in~\eqref{eq:distribution}. It has been empirically observed that embeddings obtained in this way are not ``aligned,'' in the sense that it need not be true that an image and its corresponding best caption are close in embedding space~\cite{liangMindGapUnderstanding2022}. On the other hand, other recent works have observed that embeddings of images and text can be interchanged in some cases~\cite{nukraiTextOnlyTrainingImage2022, tragerLinearSpacesMeanings2023}, suggesting at least some amount of structural alignment. We believe that our framework can be used to gain insights on these phenomena. Specifically, consider a set of paired images and captions jointly factored $x = (x_1,\ldots,x_k)$ and $y = (y_1,\ldots,y_k)$ for example object, style, background, color, etc. In this setting, it is reasonable to assume a ``disentanglement'' probabilistic condition of the form
\[
p(y|x) \propto \prod_i f_i(y_i,x_i).
\]
This allows us to apply Proposition~\ref{prop:causal-independence} which describes structural conditions between the embeddings from the two modalities. In particular, the first-order components for the two modalities are paired from the second condition of Proposition~\ref{prop:causal-independence}, thus providing a weak form of alignment. For example, if factor variables are binary attributes, then first order interaction terms form a dual vector basis.
\end{example}

\begin{example}[Compositional structures during training.] \label{ex:dynamics} We present a small experiment to validate our results in a synthetic setting. We generate data by sampling from a categorical distribution $P(z_1,z_2,z_3)$ over $\Z^3$, with $\Z = \{1,\ldots,10\}$ and train a small transformer to predict $z_3$ given $z_1, z_2$ (see the Appendix for experimental details). We construct $P$ so that $z_1, z_2$ are conditionally independent given $z_3$. Proposition~\ref{prop:causal-independence} then says says that the projections of input embeddings $\vu(z_1,z_2)$ onto the space $Span\{\vv(z) - \vv(z') \colon z \in \mathcal Z\}$ should be decomposable. Our experiment shown in Figure~\ref{fig:train-interactions} confirms this, as can be seen from the relative norm of the pairwise interaction terms throughout training (the curve for $[1,1]$) and the geometry of a projected set of paired $2 \times 2$ inputs. However, the plots on the left also shows that embeddings are roughly decomposable even at beginning of the training process. Indeed, it was already observed in~\cite{tragerLinearSpacesMeanings2023} that a \emph{randomly initialized} transformer encoder exhibits decomposable structures when factors are aligned with tokens (\ie, when each factor corresponds to a substring). This is a useful bias of transformers, as it encourages words to be processed compositionally. In the plots shown at the center, we use a categorical distribution where the $z_1, z_2$ are conditionally independent but not in a token-aligned manner (we apply a permutation to the set of all pairs $(z_1,z_2)$). In this case, the decomposable structure is not present at initialization but ``emerges'' with training. Finally, in the plot on the right, we use a distribution without the factored structure but consider aligned interactions when $z_1,z_2$ take the same values. Here we see that the embeddings are roughly decomposable at initialization however this structure is destroyed during training.
\end{example}

\begin{figure}[htbp]
    \centering
    \includegraphics[width=0.25\textwidth]{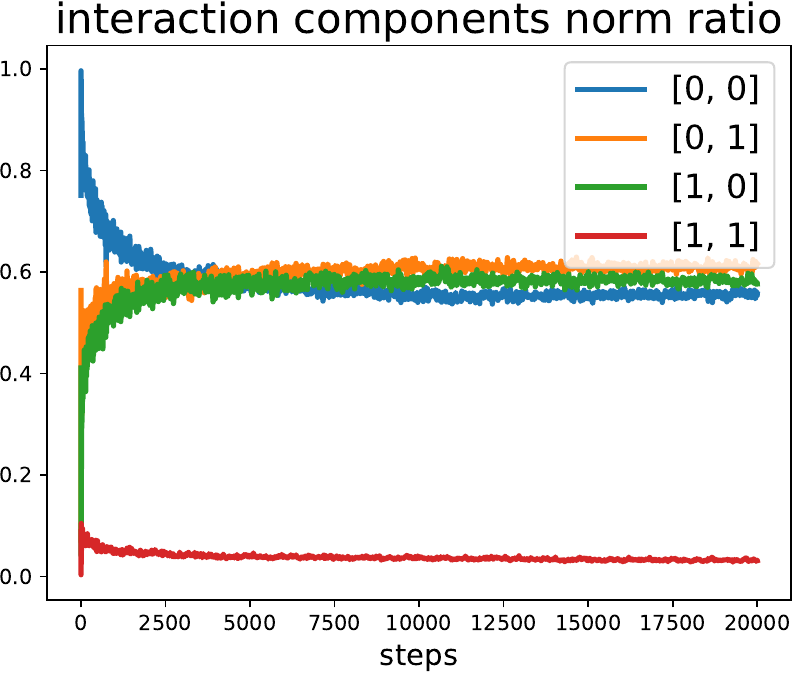}\qquad
    \includegraphics[width=0.25\textwidth]{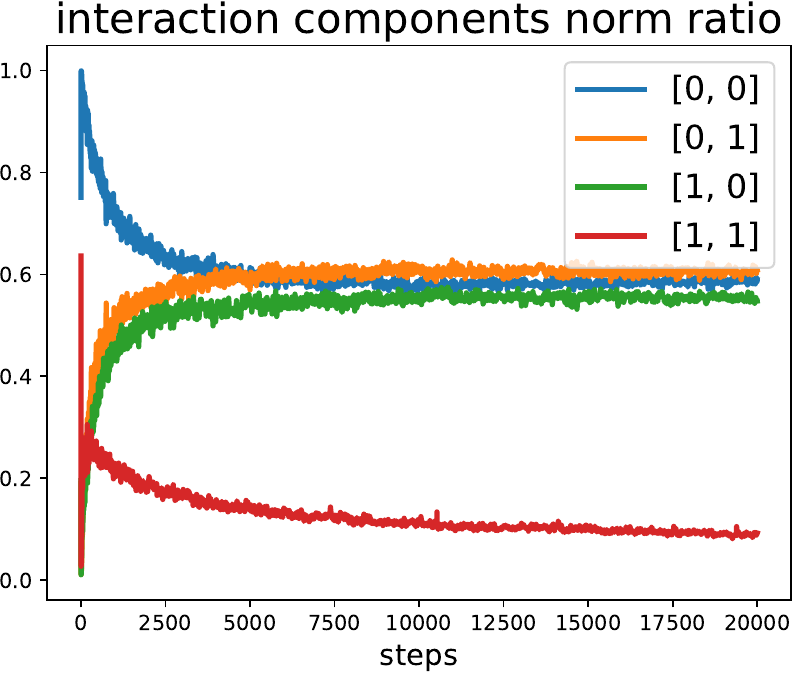}\qquad
    \includegraphics[width=0.25\textwidth]{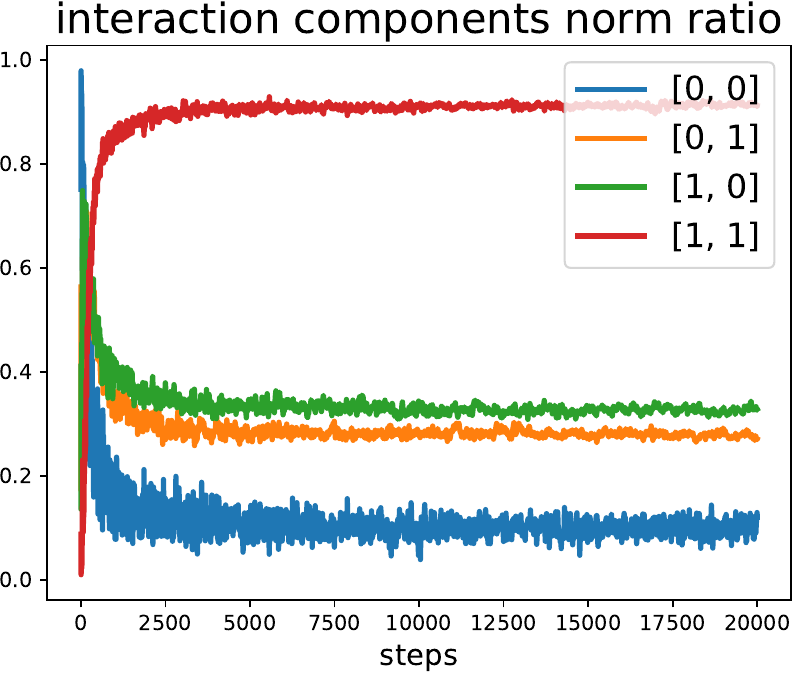}\\[.2cm]
    \includegraphics[width=0.25\textwidth]{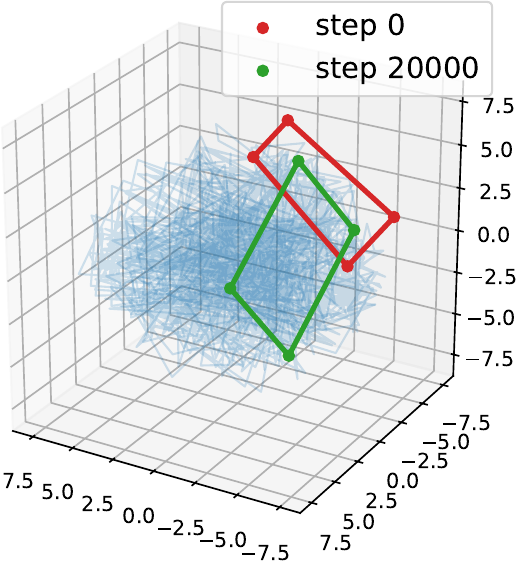}\qquad
    \includegraphics[width=0.25\textwidth]{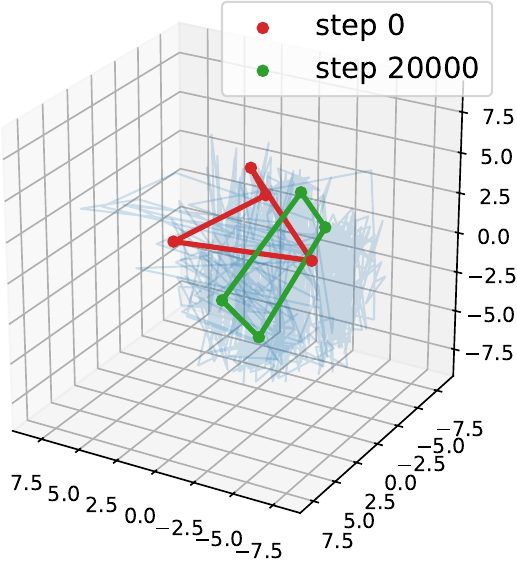}\qquad
    \includegraphics[width=0.25\textwidth]{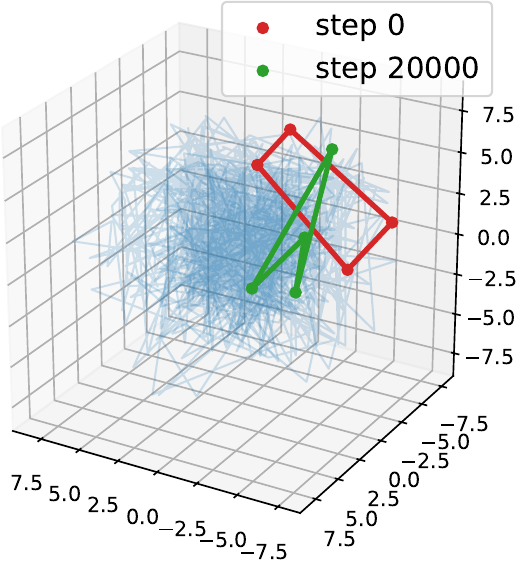}
    \caption{Compositional structures throughout training. The top row shows the evolution of the norm of interaction components for the two input tokens ([1,1] corresponding to the pairwise interaction $\vu_{12}$) and the second row shows projections of embeddings. \emph{Left (top and bottom):} for factors aligned with tokenization (``syntactic factors''), the decomposable structure is present at initialization. \emph{Center (top and bottom):} for factors that do not correspond to tokens (``semantic factors'') the decomposable structure is emergent. \emph{Right (top and bottom):} if the probability is not factored, the decomposable structure is destroyed by the training process.}
    \label{fig:train-interactions}
\end{figure}

\section{Conclusions}
\label{sec:conclusions}

We have given a description of linear compositional structures in neural
embeddings in terms of interaction decompositions and showed a precise
correspondence between these structures and probabilistic constraints on data
distributions.

One limitation of our presentation is that products of finite sets may not
always be suited for modeling complex relational structures. Our setup also
considers only the embeddings prior to the final softmax, so our results do
not directly describe structures in intermediate layers. Furthermore, our
description does not account for the learning and the training process, which
are likely important for regularization, as seen in Example~\ref{ex:dynamics}
and also argued in~\cite{jiangOriginsLinearRepresentations2024a}.

Our framework can be compared with the compositional vector representations of
text considered in~\cite{coeckeMathematicalFoundationsCompositional2010a}. In
that approach, a representation is constructed using tensor products from
representations associated with constituent parts of the text. While this is
theoretically well-motivated, it requires a grammar to be fixed beforehand. In
contrast, we seek to ``deconstruct'' a given single representation (or a pair
of representations) that models a distribution with latent structure. The two
perspectives are however connected, as tensor decompositions implicitly appear
in our framework as well --- for example, exponentiating decomposable
representations yields tensors with rank-one slices. Interaction
decompositions are in fact related to the geometry of exponential families
(see Appendix~\ref{sec:exponential}). Ideas from information
geometry~\cite{ayInformationGeometry2017a} and algebraic
statistics~\cite{pachterAlgebraicStatisticsComputational} could be used to
further study the discrete combinatorial structures in neural embeddings
described in this work.

\bibliographystyle{plain}
\bibliography{references}
\newpage
\appendix

\section{Proofs}
\label{sec:proofs}

\begin{proof}[Proof of Proposition~\ref{prop:projections}] The result is essentially the same as Proposition~2.17 in~\cite{ayInformationGeometry2017a}, but we sketch an alternative (and arguably simpler) proof for completeness.

We write $V^{\Z} \cong \RR^{n_1} \otimes \ldots \otimes \RR^{n_k} \otimes V$ where $n_i = |\Z_i|$. We also consider decompositions of the form $\RR^{n_i} \cong W_{n_i,0} \oplus W_{n_i,1}$ where $W_{n_i,0} = Span\{(1,\ldots,1)^\top\}$ and $W_{n_i,1} = \{ v \colon \sum_{j=1}^{n_i} v_j = 0\}$ (``trivial'' and ``standard'' representations of $\mathfrak S_{n_i}$). This yields
\[
\begin{aligned}
V^{\Z} &\cong (W_{n_1,0} \oplus W_{n_1,1}) \otimes \cdots (W_{n_k,0} \oplus W_{n_k,1}) \otimes V\\[.2cm]
&\cong \bigoplus_{\epsilon \in \{0,1\}^k} W_{n_1,\epsilon_1} \otimes \cdots \otimes W_{n_k,\epsilon_k} \otimes V.
\end{aligned}
\]
For each $I \subset [k]$, the pure interaction space $E_I$ corresponds to the summand $W_{n_1,\epsilon_1} \otimes \cdots \otimes W_{n_k,\epsilon_k} \otimes V$ where $\epsilon_i = 1$ if $i \in I$ and $\epsilon_i = 0$ otherwise. The projection onto such space can be written as $\tau_{n_1,\epsilon_1} \otimes \ldots \otimes \tau_{n_k,\epsilon_k} \otimes {\rm Id}_V$ where each $\tau_{n_i, \epsilon_i}: \RR^{n_i} \rightarrow \RR^{n_i}$ is the projection onto $W_{n_i,\epsilon_i}$, described by
\[
\tau_{n_i,0}(v) = \left(\frac{1}{n_i} \sum_{j=1}^{n_i} v_j\right) \cdot (1,\ldots,1)^\top \qquad \text{or} \qquad \tau_{n_i,1}(v) = v - \tau_{n_i,0}(v).
\]
We now observe that 
\[
\begin{aligned}
\tau_{n_1,\epsilon_1} \otimes \ldots \otimes \tau_{n_k,\epsilon_k} \otimes {\rm Id}_V = \sum_{\substack{\delta \in \{0,1\}^k \\[.1cm] \delta \le \epsilon}} (-1)^{|\epsilon| - |\delta|} \tilde \tau_{n_1,\delta_1} \otimes \ldots \otimes \tilde \tau_{n_k,\delta_k} \otimes {\rm Id_V},
\end{aligned}
\]
\[
\text{where} \qquad \tilde \tau_{n_i,0} = \tau_{n_i,0} \text{ and } \tilde \tau_{n_i,1} = {\rm Id}_{n_i}.
\]
and that $\tilde \tau_{n_1,\delta_1} \otimes \ldots \otimes \tilde \tau_{n_k,\delta_k} \otimes {\rm Id_V}$ corresponds to the map $\pi_J(\bm w) = \frac{|\Z_J|}{|\Z|}\sum_{z_{[k] \setminus J} \in \Z_{[k] \setminus J}} \vw(z_J, z_{[k] \setminus J})$ with $J = \{i \in [k] \colon \delta_i = 1\}$. All claims of Proposition~\ref{prop:projections} now follow.
\end{proof}

\begin{proof}[Proof of Lemma~\ref{lemma:cond-ind}] If~\eqref{eq:factorization-exists} holds, then it is enough to set $p(x) \propto h(x_1,\ldots,x_m)^{-1}$. Conversely, $Z_A \ci Z_B \, |_X Z_C$ means that there exists $p$ such that $P(Y|X)p(x) = f(z_A, z_C) g(z_B, z_C)$, which implies that $P(Y|X) = f(z_A, z_C) g(z_B, z_C) p(x)^{-1}$.
\end{proof}

To prove our main result we use the following basic fact.

\begin{lemma}\label{lemma:interaction-basic} Let $\mathcal S$ be a family of subsets of $[k]$. A map $\vw: \Z_1 \times \ldots \times \Z_k \rightarrow V$ can be written as
$\vw = \sum_{I \in \mathcal S} f_I(z_I)$ for some functions $f_I: \Z_I \rightarrow V$ if and only if the interaction components of $\vw$ are such that $\vw_J=0$ whenever $J$ is not contained in any set in $\mathcal S$.
\end{lemma}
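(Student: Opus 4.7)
The plan is to leverage Proposition~\ref{prop:projections} together with a single structural observation: an embedding $\vw \in V^{\mathcal Z}$ depends only on the variables in $I \subset [k]$ if and only if it lies in $\bigoplus_{J \subset I} E_J$. This follows directly from the tensor product description used in the proof of Proposition~\ref{prop:projections}, where ``depending only on $z_i$'' corresponds to lying in the $W_{n_i,0} \oplus W_{n_i,1}$ part, while ``constant in $z_i$'' corresponds to the trivial summand $W_{n_i,0}$; alternatively, one can verify it directly using the explicit formula~\eqref{eq:proj} for $Q_J$. I would establish this as a one-line preliminary.

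For the $(\Leftarrow)$ direction, I would start from the interaction decomposition $\vw = \sum_{J \subset [k]} \vw_J$ guaranteed by Proposition~\ref{prop:projections}. By hypothesis, the only surviving summands are those $\vw_J$ for which $J \subset I$ for some $I \in \mathcal S$. For each such $J$, fix one choice $I(J) \in \mathcal S$ with $J \subset I(J)$, and define
\[
f_I(z_I) := \sum_{J : I(J) = I} \vw_J(z_J).
\]
Since each $\vw_J$ depends only on $z_J$ and $J \subset I$, $f_I$ depends only on $z_I$, and summing over $I \in \mathcal S$ reassembles $\vw$.

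For the $(\Rightarrow)$ direction, suppose $\vw = \sum_{I \in \mathcal S} f_I(z_I)$. By the preliminary observation, each $f_I$ lies in $\bigoplus_{J \subset I} E_J$, so its own interaction components $(f_I)_J$ vanish whenever $J \not\subset I$. By linearity of the projections $Q_J$ from~\eqref{eq:proj} and the uniqueness of the interaction decomposition, we get
\[
\vw_J = \sum_{I \in \mathcal S,\, I \supset J} (f_I)_J,
\]
and the right-hand side is the empty sum when $J$ is not contained in any set of $\mathcal S$. Hence $\vw_J = 0$ in that case, completing the proof.

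The main (and essentially only) obstacle is the preliminary identification of ``functions depending only on $z_I$'' with $\bigoplus_{J \subset I} E_J$; everything else is bookkeeping against the direct sum decomposition and the uniqueness it provides. Once that identification is in hand, the lemma is a clean rearrangement of the interaction decomposition, with no further computation needed.
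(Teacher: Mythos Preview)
Your proposal is correct and follows essentially the same approach as the paper: both reduce to the fact that an embedding depending only on $z_I$ lies in $\bigoplus_{J \subset I} E_J$, and then use linearity and uniqueness of the interaction decomposition. The only difference is in how that preliminary is justified---the paper uses the M\"obius inversion identity $\pi_I = \sum_{J' \subset I} Q_{J'}$ together with $Q_J Q_{J'} = 0$ for $J \ne J'$, whereas you appeal to the tensor product description from the proof of Proposition~\ref{prop:projections}; these are equivalent one-line arguments for the same fact.
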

\begin{proof} It is clear that the condition is sufficient. To show that it is necessary, we note that $\pi_I(f_I) = f_I$, so it is enough show that $Q_J \pi_I = 0$ whenever $J \not \subset I$. It follows from the M\"obius inversion formula (\cite[Lemma 2.12]{ayInformationGeometry2017a}) that $\pi_I = \sum_{J' \subset I} Q_{J'}$. Since $Q_J Q_{J'} = 0$ whenever $J \ne J'$, the claim follows.
\end{proof}

\begin{proof}[Proof of Theorem~\ref{thm:general}] Assume first that the interaction decompositions $\vu = \sum_{I \subset [m]} \vu_I$ and $\vv = \sum_{J \subset [n]} \vv_J$ satisfy
\begin{equation}\label{eq:interaction-vanishing-p}
\langle \vu_I, \vv_J\rangle = 0, 
\end{equation}
for all $I \subset [m]$, $J \subset [n]$ with $J \ne \varnothing$ such that $(I \sqcup J) \cap A \ne \varnothing$ and $(I \sqcup J) \cap B \ne \varnothing$. Then we have that
\[
\begin{aligned}
\log P(Y|X) =& \sum_{I\sqcup J \subset A \cup C} \langle \vu_I(x), \vv_J(y) \rangle \\
&+ \sum_{I\sqcup J \subset B \cup C} \langle \vu_I(x), \vv_J(y) \rangle \\
&- \sum_{I\sqcup J \subset C} \langle \vu_I(x), \vv_J(y) \rangle \\
&+ \langle \vu(x), \vv_{\varnothing} \rangle - \psi(x),
\end{aligned}
\]
where $\psi(x) := \log \sum_{y' \in \mathcal Y} \exp \langle \vu({x}), \vv({y'})\rangle$. Exponentiating, we see that $P(Y|X)$ this has the same form as~\eqref{eq:factorization-exists}. Conversely, assume that we can write
\begin{equation}\label{eq:decomposition-product}
\begin{aligned}
&\sum_{I \subset[m], J \subset[n]} \langle \vu_I(x), \vv_J(y) \rangle  =\\
&\quad=\tilde f(z_A, z_C) + \tilde g(z_B, z_C) + \tilde h(x_1,\ldots,x_m).
\end{aligned}
\end{equation}
If we treat $\vw: (x,y) \mapsto \langle \vu(x), \vv(y) \rangle$ as a map $\X_1 \times \ldots \times \X_m  \times \Y_1\times \ldots \times \Y_n \rightarrow \RR$, then it is easy to see that $\langle \vu_I, \vv_J\rangle$ is the $(I \sqcup J)$-interaction component of $\vw$. Indeed, writing $Q_I^\X$, $Q_J^\Y$ for the projection operators on the $I, J$ components for $\X$, $\Y$ respectively (cf. Proposition~\ref{prop:projections}), we have that
\[
\begin{aligned}
\langle \vu_I, \vv_J \rangle  &= \langle Q_I^\X \vu, Q_J^\Y \vv \rangle\\[.2cm]
&=\langle \sum_{I' \subset I} (-1)^{|I \setminus I'|} \pi_{I'}^\X \vu, \sum_{J' \subset J} (-1)^{|J \setminus J'|} \pi_{J'}^\Y \vv\rangle \\[.2cm]
&=\sum_{I' \subset I} \sum_{J' \subset J} (-1)^{|I \setminus I'| + |J \setminus J'|} \langle \pi_{I'}^\X \vu,  \pi_{J'}^\Y \vv\rangle\\[.2cm]
&=\sum_{I' \subset I} \sum_{J' \subset J} (-1)^{|I \sqcup J \setminus I' \sqcup J'|} \pi_{I' \sqcup J'}^{\X\times \Y} \langle \vu(x),  \vv(y)\rangle \\[.2cm]
&=Q_{(I,J)}^{\X \times \Y} \langle \vu, \vv \rangle,
\end{aligned}
\]
where $Q_{I \sqcup J}^{\X \times \Y}$ is the projection onto the $I \sqcup J$-component for $\X \times \Y$. It follows now from Lemma~\ref{lemma:interaction-basic} above that $\langle \vu_I, \vw_J \rangle \ne 0$ implies $I \sqcup J \subset A \cup C$ or $I \sqcup J \subset B \cup C$ or $J = \varnothing$, as desired.
\end{proof}

\begin{proof}[Proof of Proposition~\ref{prop:conditional-independence}.] The claim follows from Theorem~\ref{thm:general} applied to $P_x$ (which has the form~\eqref{eq:distribution} for $\mathcal X= \{x\}$) with $m=1$. 
\end{proof}

\begin{proof}[Proof of Proposition~\ref{prop:relative-causal-independence}] It follows from Theorem~\ref{thm:general} applied to the restriction of $P(Y|X)$ to $Y \in \mathcal Y_0$ that relative causal independence is equivalent to
\[
\langle \vu_{H}, \vv_{|\Y_0, 1} \rangle = 0
\]
for all $H \subset [m]$ such that $H \cap I \ne \varnothing, H \cap J \ne \varnothing$
where $\vv_{|\Y_0} = \vv_{|\Y_0, \varnothing} + \vv_{|\Y_0, 1}$ is the interaction decomposition for $\vv_{|\Y_0}: \Y_0 \rightarrow V$ (the restriction of $\vv$ to $\Y_0$, with only one factor). The claim now follows by observing that since $\vv_{|\Y_0, 1}(y) = \vv(y) - \frac{1}{|\Y_0|}\sum_{y' \in \Y_0} \vv(y')$ we have $Span(\vv_{|\Y_0, 1}(y) \colon y \in \Y_0) = Span(\vv(y) - \vv(y') \colon y, y' \in \Y_0)$.
\end{proof}

\begin{proof}[Proof of Proposition~\ref{prop:causal-independence}.] The condition~\eqref{eq:causal-independence} means that $Z_A \ci Z_B \mid_X Z_C$ unless $Z_A \sqcup Z_B \subset \{X_i, Y_i\}$. By Theorem~\ref{thm:general}, this is equivalent to $\langle \vu_I, \vv_J\rangle = 0$ unless either 1) $J = \varnothing$, or 2) $J=\{i\}$ and $I = \varnothing$ or $I=\{i\}$. This immediately yields the second condition in the statement of the Proposition. For the first one, we note that if $|J| \ge 2$, then $\langle \vu, \vv_J\rangle = \sum_{I} \langle \vu_I, \vv_J\rangle=0$, which implies $\langle \vu, \tilde \vv \rangle = 0$. Similarly, if $|I| \ge 2$, then $\langle \vu_I, \vv - \vv_{\varnothing} \rangle = \sum_{J \ne \varnothing} \langle \vu_I, \vv_J\rangle=0$ which implies $\langle \tilde \vu, \vv - \vv_{\varnothing} \rangle = 0$. This yields the first condition.

Conversely, if the two conditions in the statement hold, then the model is equivalent to one in which $\tilde \vu = 0$ and $\tilde \vv = 0$. Together with $\langle \vu_i, \vv_j \rangle = 0$ unless $i=j$, this means that the conditional relations~\eqref{eq:causal-independence} hold.
\end{proof}

\section{Exponential families}
\label{sec:exponential}

We briefly elaborate on the connection between interaction decompositions of embeddings and classical ideas related to exponential families and graphical models. Our discussion on these topics follows~\cite{ayInformationGeometry2017a}.

Let $\mathcal Z$ be a finite set and let $\mathcal P_+(\mathcal Z)$ be the space of probabilities over $\mathcal Z$ with full support. A (centered) \emph{exponential family} is a subset of $\mathcal P_+(\mathcal Z)$ of the form
\begin{equation}\label{eq:exponential-family}
\mathcal E(\mathcal L) = \left\{\left(\frac{e^{f(z)}}{\sum_{z'} e^{f(z')}}\right)_{z \in \mathcal Z} \colon f \in \mathcal L\right\},
\end{equation}
where $\mathcal L \subset \RR^{\mathcal Z}$ is a vector space of real-valued functions. In particular, assume that $\mathcal Z = \Z_1 \times \ldots \times \Z_k$ and let $\mathcal S$ be a (non-empty) family of subsets of $[k]$. The \emph{hierarchical model} associated with $\mathcal S$ is the exponential family defined by the space of functions
\begin{equation}\label{eq:hierarchical}
\mathcal L_{\mathcal S} := \bigoplus_{I \in \mathcal S} E_{I},
\end{equation}
where $E_I$ are the interaction spaces for $\RR^\Z$, defined as in Proposition~\ref{prop:projections}.
Thus, $\mathcal F_{\mathcal S}$ is the space of functions of the form $f(z) = \sum_{I \in \mathcal S} g_I(z_I)$ with $g_I: \Z_I \rightarrow V$. 
In particular, when $\mathcal S$ is the sets of cliques of a graph $G$ with $k$ nodes, then $\mathcal E(\mathcal F_{\mathcal S})$ is called a \emph{graphical model} for $G$. The {Hammersley-Clifford Theorem}~\cite[Theorem 2.9]{ayInformationGeometry2017a} relates the structure of the graph $G$ --- and the corresponding decomposition in~\eqref{eq:hierarchical} --- with conditional independence conditions between factors.

In this work, we assume that $\mathcal Z = \mathcal X_1 \times \ldots \times \X_m \times \Y_1 \times \ldots \times \Y_n$, so $\Z$ is a product of two groups of factors, $\X = \prod_{i=1}^m \X_i$ and $\Y = \prod_{i=1}^n \Y_i$. Moreover, we consider functions $f(z)$ in~\eqref{eq:exponential-family} of the form $f(z) = \langle \vu(x), \vv(y) \rangle$ where $z=(x,y)$ and $\vu: \X \rightarrow V$ and $\vv: \Y \rightarrow V$ are embeddings into a vector space.\footnote{The set of all functions of the form $f(z) = \langle \vu_f(x), \vv_f(y) \rangle$ do not form a vector space if $\dim V < \min(|\X|, |\Y|)$; however we can always consider the space generated by a collection of such functions.} In the proof of Theorem~\ref{thm:general}, we show that if the interaction decomposition of a general $f = \langle \vu(x), \vv(y) \rangle$ is
\begin{equation}\label{eq:interaction-decomposition-f}
f(z) =  \sum_{I \subset [m], J \subset [n]} g_{I \sqcup J}(z),
\end{equation}
then we have that
\[
g_{I\sqcup J}(z) = \langle \vu_{I}(x), \vv_{J}(y) \rangle, 
\]
where $z=(x,y)$  and $\vu_I$ and $\vv_J$ are terms in the interaction decompositions of the embeddings $\vu$ and $\vv$.
Thus, the vanishing of terms in~\eqref{eq:interaction-decomposition-f} corresponds to orthogonality conditions on the interaction components of the embeddings.

\section{Examples of geometric structures}
\label{sec:geometry-examples}

We noted in the main body of the paper that as more interaction components of an embedding $\vw: \Z \rightarrow V$ vanish, the structure of the points $\{\vw(z)\colon z \in \Z\}$ becomes more regular. We illustrate this by visualizing a few small examples.

If $\Z = \Z_1 \times \Z_2$ with $|\Z_i| = 2$, then $\vw_{\{12\}}=0$ means that $\{\vw(z): z \in \Z\}$ are vertices of a parallelogram in an affine plane, as discussed in~Example~\ref{ex:analogies}. If $\vw_{\{12\}}$ is non-zero, then the four points can be in general position, \ie, the vertices of a three-dimensional simplex (Figure~\ref{fig:geometric-examples}, first row).

If $\Z = \Z_1 \times \Z_2$ with $|\Z_1|=2$ and $|\Z_2|=3$, then $\vw_{\{12\}}=0$ means that $\{\vw(z): z \in \Z\}$ are vertices of a (three-dimensional) triangular prism. If $\vw_{\{12\}}$ is non-zero, then the six points can be in general position (Figure~\ref{fig:geometric-examples}, second row; note that in the second case we project the five-dimensional structure to 3D).

If $\Z = \Z_1 \times \Z_2 \times \Z_3$, $|\Z_i|=2$, and all order two interaction components are zero, then $\{\vw(z): z \in \Z\}$ are vertices of a (three-dimensional) parallelepiped. 
When only $\vw_{\{13\}} = 0$, the embeddings of $(z_1, z_2, z_3)$ for any fixed value of $z_2$ form vertices of a planar parallelogram, but these parallegrams are not parallel (Figure~\ref{fig:geometric-examples}, third row, again projecting to 3D).

\vspace{.1cm}

\begin{figure}[h]
\centering
\includegraphics[width=0.18\textwidth]{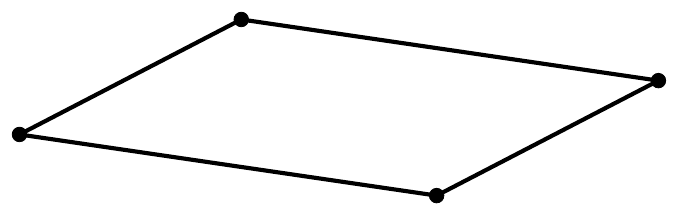} \qquad
\includegraphics[width=0.18\textwidth]{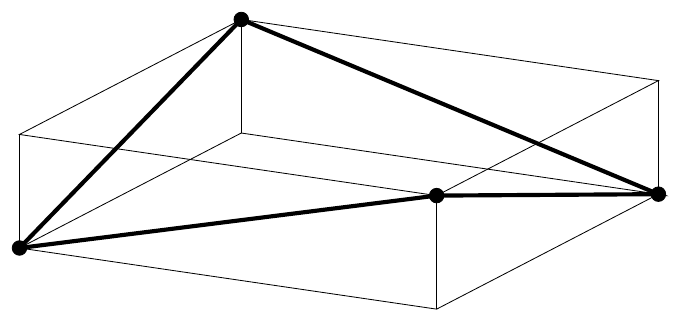}\\[.5cm]
\includegraphics[width=0.13\textwidth]{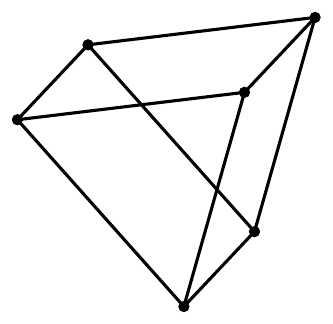} \qquad \qquad
\includegraphics[width=0.13\textwidth]{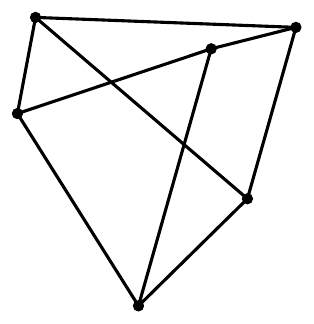}\\[.1cm]
\includegraphics[width=0.12\textwidth]{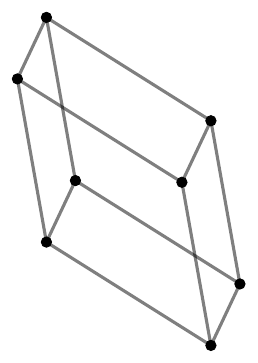} \qquad \qquad
\includegraphics[width=0.12\textwidth]{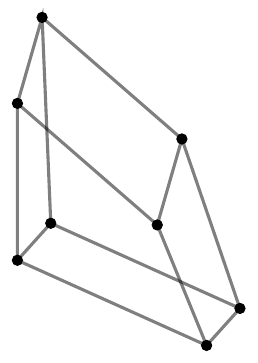}\\
\caption{
\small
\emph{Top:} two factors  without (left) and with (right) pairwise interactions. \emph{Middle:} two factors with two and three elements without (left) and with (right) pairwise interactions. \emph{Bottom:} three binary factors without pairwise interactions (left) and with only two out of three pairwise interactions (right). Note that in the last case the top and bottom faces are still parallelograms, but are not parallel.}
\label{fig:geometric-examples}
\end{figure}

\section{Plot details}
\label{sec:details}

\textbf{Figure~\ref{fig:qualitative-examples}:} \emph{Left:} We use ST5-XL~\cite{niSentenceT5ScalableSentence2021} to embed the attributes ``big,'' ``small,'' ``new,'' ``old,'' the objects ``bike,'' ``car,'' ``boat'', and all of their combinations, and plot the embeddings in 3D with PCA (adding colored lines for visualization).
\emph{Right:} We consider $7 \times 7$ pairs of words (listed in the plot), compute the embedding decompositon for the entire set $\vu = \vu_0 + \vu_1 + \vu_2 + \vu_{12}$ using Proposition~\ref{prop:projections} and then show the norm of the interaction component $\vu_{12}$ for each pair.

\textbf{Figure~\ref{fig:train-interactions}:} We use the MinGPT codebase\footnote{\url{https://github.com/karpathy/minGPT/tree/master}} and use the default model ``GPT-mini'' with vocab size $10$ (total 2.7M parameters). We generate categorical distributions over $[10]\times[10]\times [10]$ with the desired factorization properties and create a dataset which samples from this distribution. We train for 20K steps with batch size 512, optimizer AdamW with learning rate 3e-4, and other default parameters from the codebase. We use a callback to compute at every training step the embeddings of all possible 10$\times$10 inputs and their interaction components. We project the embeddings and the components onto the space spanned by centered output embeddings (which is the space spanned by differences $\vv(z)- \vv(z')$). In the top plot we show the mean of the ratio between the (projected) embedding norm and the interaction component norms. In the bottom plot, we show embeddings projected in 3D with PCA for a random quadruple $\{z_1,z_1'\} \times \{z_2,z_2'\}$ at the beginnning and at the end of training. For the second plot, we permute inputs before feeding them to the transformer so that the tokenization cannot help the model, however we compute the interactions for the original (latent) factorization to reflect the structure of the data distribution.

\end{document}